\documentclass[letterpaper]{article} 
\usepackage{aaai2027}  
\usepackage[hyphens]{url}  
\usepackage{graphicx} 
\usepackage{natbib}  
\usepackage{caption} 
\usepackage{algorithm}
\usepackage{algorithmic}
\usepackage{xspace}
\usepackage{booktabs}
\usepackage{amsmath}
\usepackage{amssymb}
\newtheorem{proposition}{Proposition}
\usepackage{makecell}
\usepackage{multirow}
\usepackage{placeins}
\newcommand{\NA}{N/A}

\usepackage{newfloat}
\usepackage{listings}
\DeclareCaptionStyle{ruled}{labelfont=normalfont,labelsep=colon,strut=off} 
\floatstyle{ruled}
\newfloat{listing}{tb}{lst}{}
\floatname{listing}{Listing}

\nocopyright

\title{Relevant but Incomplete: Referential Dangling as a Paradigm-Level\\
Failure Mode in Hard Prompt Compression}

\author{
    Zhengpei Hu\textsuperscript{\rm 1}\equalcontrib,
    Kai Li\textsuperscript{\rm 2}\equalcontrib,
    Dapeng Fu\textsuperscript{\rm 3},
    Xuechao Zou\textsuperscript{\rm 2},
    Yuanhao Tang\textsuperscript{\rm 1},
    Yue Li\textsuperscript{\rm 1},
    Tengfei Cao\textsuperscript{\rm 1},
    Jianqiang Huang\textsuperscript{\rm 1}\corresponding
}
\affiliations{
    \textsuperscript{\rm 1}School of Computer Technology and Application, Qinghai University\\
    \textsuperscript{\rm 2}Tsinghua University\\
    \textsuperscript{\rm 3}Ant Group Security and Intelligence Laboratory (SIL)
}

\begin{document}
\maketitle

\begin{abstract}
Hard prompt compression reduces long-context inference cost by scoring tokens, sentences, or chunks independently and retaining the highest-scoring units under a budget. We identify a structural failure in this procedure: independent selection can split dependent evidence pairs, retaining one member while deleting the other. When the retained text span contains an answer but the deleted span defines the entity needed to interpret it, we call the result \emph{referential dangling}. At compression ratio $0.30$, \textsc{Beaver}, which ranks coherent chunks using Qwen3-0.6B embeddings, leaves the answer path incomplete in $34$ to $54\%$ of bridge examples across three multi-hop question answering (QA) datasets. The failure is not specific to that implementation: on a shared HotpotQA bridge set, all six hard compressors we test exhibit dangling at rates reaching $60\%$, and every document in LongBench-v2 Single-Document QA contains at least one dangling reference. We then test whether selecting different content at the same budget helps. On dangling examples evaluated with Qwen3-8B, reinserting the missing supporting paragraph and offsetting its tokens by removing paragraphs not annotated as supporting the answer improves accuracy by $29$ to $34$ points ($p<10^{-4}$), recovering at least $88\%$ of the gap to contexts that retain both annotated supporting paragraphs. Stronger answer models do not absorb the loss: on MuSiQue, GPT-5.5 is $8.8$ points less accurate on the compressed contexts than on contexts retaining both supporting paragraphs. Finally, we train a compact classifier to rank sentences the compressor omitted by whether they are needed to interpret retained text, and reinsert the top-ranked candidates without using support annotations at inference. On HotpotQA with Qwen3-8B as the downstream answer model, this automatic restoration improves accuracy by $4.7$ points while changing the compression ratio only from $0.30$ to $0.31$. Hard compressors should optimize relevance and referential completeness.
\end{abstract}

\section{Introduction}

\begin{figure}[t]\centering
\includegraphics[width=\columnwidth]{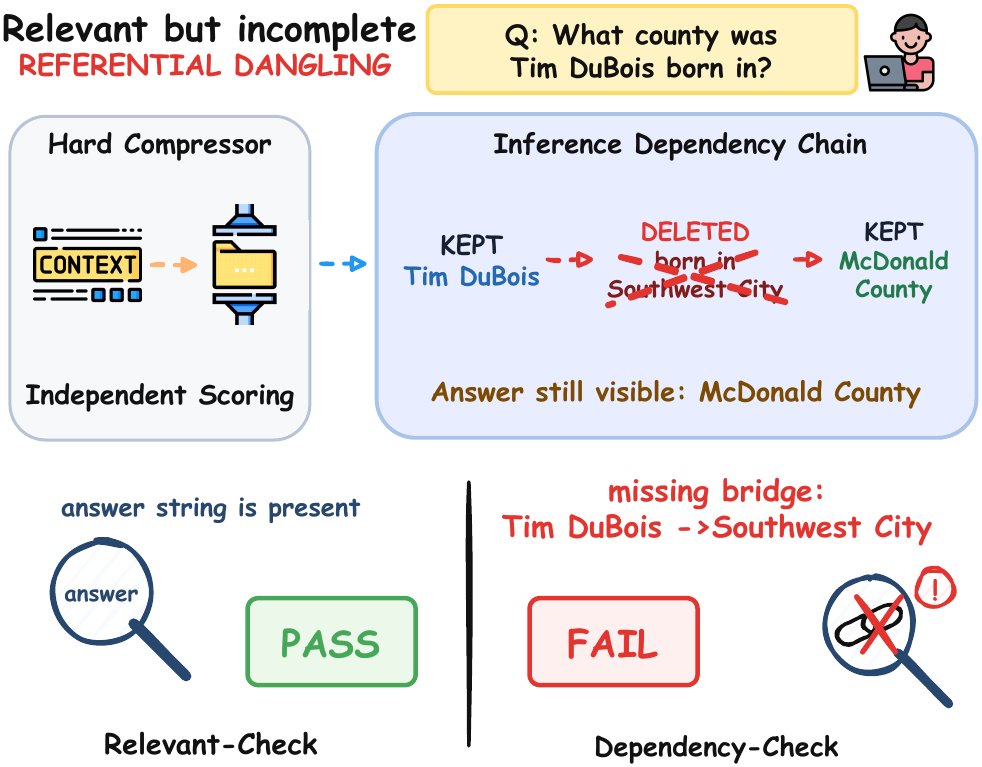}
\caption{Referential dangling with a missing bridge. Independent scoring retains the query subject and the answer string but removes the fact that Tim DuBois was born in Southwest City, leaving the inference chain incomplete.}
\vspace{-15pt}
\label{fig:teaser}\end{figure}

In recent frontier models, the context windows of large language models (LLMs) have expanded from a few thousand tokens to the million-token scale. These expanded windows enable reasoning over entire documents and collections of documents. Yet long contexts remain expensive: the quadratic cost of self-attention inflates prefill latency, and models underuse relevant content when it appears away from prompt boundaries \citep{liu2024lost}. Prompt compression addresses this tension by pruning the input before it reaches the target model while trying to preserve the information needed for the task \citep{jiang2023llmlingua,pan2024llmlingua2,li2023selective}.

Many hard prompt compressors follow a common selection procedure: they score each token, sentence, or chunk for importance and retain the highest scoring fragments within a token budget. These scores may reflect query relevance or perplexity \citep{jiang2024longllmlingua}, embedding similarity \citep{beaver2026}, information content or learned classifier scores \citep{li2023selective,pan2024llmlingua2}, syntactic salience \citep{mao2025partprompt}, or importance derived from attention \citep{zhao2025dac}. Although these methods differ in architecture and supervision, they share the objective of retaining fragments that appear important. This objective assumes that the selected fragments collectively form a usable prompt. However, an important fragment need not be self-contained, because its meaning may depend on an earlier definition, an antecedent, or a bridge fact that the compressor removes independently.

We call this failure \emph{referential dangling} and formalize it in Section~\ref{sec:formulation}. A compressed prompt exhibits referential dangling when it retains a relevant fragment but omits a dependency required for its interpretation, leaving the fragment unusable to the target model. Figure~\ref{fig:teaser} illustrates this mechanism. The compressor retains one fragment mentioning Tim DuBois and another stating that Southwest City is in McDonald County, but deletes the bridge stating that DuBois was born in Southwest City. Although the answer string remains in the compressed prompt, the retained fragments no longer support an inference from the query subject to the answer. This example illustrates why relevance alone does not guarantee a usable compressed context. It remains unclear whether dangling is systematic, fixed-budget reselection recovers accuracy, or restoration can be automated.

We first examine the official implementation of \textsc{Beaver}~\citep{beaver2026}. At compression ratio $0.30$, it leaves answer paths incomplete in $34$ to $54\%$ of bridge examples across HotpotQA~\citep{yang2018hotpotqa}, 2WikiMultiHopQA~\citep{ho2020constructing}, and MuSiQue~\citep{trivedi2022musique}. A human audit finds $95\%$ precision. The diagnosis extends beyond this setting: all six compressors tested exhibit dangling at rates from $32$ to $60\%$ on the shared bridge set, and all $80$ documents in LongBench-v2 Single-Document QA~\citep{bai2025longbenchv2} contain at least one dangling reference. We next conduct an annotation-guided reselection experiment: the omitted supporting paragraph is reinserted, and its token count is offset by removing the lowest-scoring paragraphs not annotated as supporting the answer. On dangling examples evaluated with Qwen3-8B, this intervention improves accuracy by $29$ to $34$ points without increasing the token budget ($p<10^{-4}$) and recovers at least $88\%$ of the accuracy gap between the original compressed contexts and contexts retaining both annotated supporting paragraphs. Finally, we generate candidate sentences from the omitted text and train a compact classifier to rank them according to whether they are needed to interpret retained text. Without using supporting paragraph annotations at inference, the method reinserts the highest ranked candidates. On HotpotQA with Qwen3-8B as the downstream answer model, this automatic restoration improves accuracy by $4.7$ points while changing the compression ratio only from $0.30$ to $0.31$. We released the code, results for individual examples, and the trained model\footnote{\url{https://cslikai.cn/Referential-Dangling/}}.

\section{Related Work}
\label{sec:related}

\paragraph{Prompt and context compression.} Prompt compression reduces long-context inference cost by shortening the input before it reaches the target model. It complements model-side efficiency methods, including elastic subnetworks, compact discrete semantic tokens for multimodal inputs, and efficient encoder--decoder or state-space architectures~\citep{li2024subnetwork,li2026dolphin,li2023tdanet,li2025spmamba}. We study hard compression, which selects tokens, sentences, chunks, or parse nodes under a budget using self-information or perplexity~\citep{li2023selective,jiang2024longllmlingua}, learned token classifiers~\citep{jiang2023llmlingua,pan2024llmlingua2}, embedding similarity~\citep{beaver2026}, syntactic salience~\citep{mao2025partprompt}, attention-based signals~\citep{zhao2025dac}, or training-free sentence and fragment selection~\citep{tang2025perception}. Task-aware selectors add reinforcement-learning rewards or key-information density objectives~\citep{shandilya2025tacorl,lin2025keydensity}, whereas soft or latent approaches use gist tokens, autoencoders, semantic source coding, or learned special-token representations~\citep{mu2023learning,chevalier2023adapting,ge2024incontext,fei2024semantic,li2025500x}. A further line of work compresses hidden activations or KV caches instead of source text~\citep{zhang2024activationbeacon,zhang2023h2o}; because these methods never produce a reduced text sequence, the failure we study does not arise in the same form and we do not evaluate them. Hard methods differ in supervision, granularity, and query access but select fragments by salience or relevance. Existing surveys organize prompt-compression methods by hard versus soft strategies and discuss their downstream adaptations~\citep{li2025surveyprompt}. Comparative studies further show that downstream performance and information preservation vary with the compression method, task, and compression setting~\citep{shandilya2024characterizing,liu2025infopreserve}, while rate-distortion analysis formalizes the budget--performance trade-off and the role of query-aware selection~\citep{nagle2024ratdist}. We study how independent hard selection can retain a fragment but remove its required definition or bridge.

\paragraph{Dependency-preserving selection.} Extractive summarization identified this dependency and built a constraint for it: \citet{durrett2016learning} add anaphoricity constraints to the selection objective, so a sentence may not be extracted unless the text its pronouns depend on is extracted with it. Hard prompt compressors inherit the selection problem without the constraint: they score units by salience or query relevance and keep the highest-scoring units under a budget, with no term rewarding the joint retention of a unit and the text needed to interpret it. Other work approaches the dependency from different angles. Coreference resolution identifies anaphoric links~\citep{lee2017endtoend}, but on complete documents; resolving links in the source does not indicate whether the retained subset stays interpretable once the antecedent is deleted. Faithfulness evaluation asks the converse question, whether a generated summary is supported by its source~\citep{maynez2020faithfulness}, rather than whether the retained source is self-contained. Closest to our setting, \citet{deng2024giststudy} report boundary and information-path failures under gist-based compression. Retrieval-augmented generation retrieves, rewrites, or attends over evidence to favor passages that support generation~\citep{lewis2020retrieval,xu2024recomp,liu2023tcrallm,choi2024r2c}, but scoring by answer support still lets a passage qualify while remaining uninterpretable once its definition is removed. Long-context studies show that accuracy also depends on where relevant information sits and on surrounding distractors~\citep{liu2024lost,shi2023large}, factors that fixed-budget reselection changes and that we therefore control. We measure what the missing constraint costs: how often independent selection splits a dependency pair, and how much downstream accuracy is lost when it does.

\section{Problem Formulation}
\label{sec:formulation}

\paragraph{Referential dangling.}
Let $C=(c_1,\ldots,c_{|C|})$ denote a tokenized context, $Q\in\mathcal{Q}$ a query, and $\tau\in\mathbb{N}$ a token budget. We restrict the formal definition to extractive hard compressors. For any extractive output $A$, let $\operatorname{Pos}_{C}(A)\subseteq\{1,\ldots,|C|\}$ denote the source-token positions emitted in $A$, each emitted at most once and in source order. An extractive compressor $f$ returns $\tilde{C}=f(C,Q,\tau)$ with $|\tilde{C}|\leq\tau<|C|$, where $|\cdot|$ denotes the number of tokens. A compressor that does not use $Q$ is query agnostic. The downstream LLM receives $Q$ and $\tilde{C}$ but not $C$.

A sentence relevant to the task may depend on explicit support elsewhere in the context, and several alternative support paths may be valid. Let $\operatorname{Sent}(C)$ denote the indexed source-sentence occurrences in $C$. For any indexed source fragment $y$, let $\mathcal{I}(y)\subseteq\{1,\ldots,|C|\}$ contain its source-token positions. Let $\mathcal{T}_{C,Q}\subseteq\operatorname{Sent}(C)$ contain the task-relevant sentences whose interpretation or use in answering $Q$ may require such support. We define exact sentence retention by
\[
\operatorname{Ret}_{C}(A)
=\left\{s\in\operatorname{Sent}(C):
\mathcal{I}(s)\subseteq\operatorname{Pos}_{C}(A)\right\}.
\]

For $s\in\mathcal{T}_{C,Q}$ and $D\subseteq\operatorname{Sent}(C)\setminus\{s\}$, let $\operatorname{Suff}_{C,Q}(s,D)$ hold when the source-ordered text formed by $D\cup\{s\}$, together with $Q$, contains all explicit information from $C$ needed to interpret $s$ and use it in an evidence chain for $Q$. Sufficiency is assessed jointly on $D\cup\{s\}$ rather than one dependency edge at a time. Define the complete family of inclusion-minimal sufficient support sets
\[
\begin{aligned}
\mathcal{D}_{C,Q}(s)
=\bigl\{D:\;&D\subseteq\operatorname{Sent}(C)\setminus\{s\},\quad
\operatorname{Suff}_{C,Q}(s,D),\\
&\nexists D'\subsetneq D\ \text{with}\ \operatorname{Suff}_{C,Q}(s,D')\bigr\}.
\end{aligned}
\]
For every $s\in\mathcal{T}_{C,Q}$, we assume $\mathcal{D}_{C,Q}(s)\neq\varnothing$. Because $\operatorname{Sent}(C)$ is finite, every sufficient $D$ contains an inclusion-minimal sufficient subset, so retaining a sufficient support set is equivalent to retaining some member of this minimal family. If $s$ requires no additional support, then $\varnothing\in\mathcal{D}_{C,Q}(s)$. Following the notion of referential completeness in extractive summarization~\citep{durrett2016learning}, we consider only support stated explicitly in $C$ and exclude commonsense inferences, implicit temporal relations, and relations that require external information. Write $\mathcal{R}=\operatorname{Ret}_{C}(\tilde{C})$. The compressed context $\tilde{C}$ exhibits referential dangling if it retains a sentence relevant to the task but retains no sufficient support set in full:
\begin{equation}
\label{eq:dangling}
\exists\,s\in\mathcal{R}\cap\mathcal{T}_{C,Q}
\quad\text{such that}\quad
\forall D\in\mathcal{D}_{C,Q}(s),
\quad D\nsubseteq\mathcal{R}.
\end{equation}
This query-dependent definition distinguishes support in the original context from support retained after compression.

\paragraph{Additive fragment selection.}
We isolate the additive selection rule for hard compressors with nonoverlapping candidate units. Let $\mathcal{F}=\{x_1,\ldots,x_n\}$ be a finite collection of candidate fragments from $C$, with source-token positions $\mathcal{I}(x)$ as above. We assume $\mathcal{I}(x_i)\cap\mathcal{I}(x_j)=\varnothing$ for all $i\neq j$. For any $\mathcal{S}\subseteq\mathcal{F}$, let $\operatorname{Out}_{C}(\mathcal{S})$ be the text formed by emitting, once and in source order, the tokens in $\bigcup_{x\in\mathcal{S}}\mathcal{I}(x)$. Define
\begin{equation}
\begin{aligned}
\operatorname{cost}(\mathcal{S})
&=\big|\operatorname{Out}_{C}(\mathcal{S})\big|
=\sum_{x\in\mathcal{S}}|\mathcal{I}(x)|,\\
\mathcal{R}_{C}(\mathcal{S})
&=\operatorname{Ret}_{C}\!\left(\operatorname{Out}_{C}(\mathcal{S})\right).
\end{aligned}
\end{equation}
This exact formulation covers token-, sentence-, and fixed-chunk selectors with nonoverlapping candidate units whose final selection solves the stated global modular knapsack objective. Selectors with overlapping or hierarchical candidates are not claimed to be exact instances of this objective and are evaluated empirically. For fixed $(C,Q)$, a scorer assigns each fragment a scalar utility
\begin{equation}
u(x)=\sigma(x;C,Q)\in\mathbb{R},
\end{equation}
where $\sigma$ may use the entire context and query. The additive assumption concerns how the final selector combines fragment utilities rather than how those utilities are computed. We model the selected set as a solution to
\begin{equation}
\mathcal{K}
\in\operatorname*{arg\,max}_{\substack{\mathcal{S}\subseteq\mathcal{F}\\
\operatorname{cost}(\mathcal{S})\leq\tau}}
\sum_{x\in\mathcal{S}}u(x).
\end{equation}
Ties are resolved by a fixed deterministic rule.
The resulting compressed text is $\tilde{C}=\operatorname{Out}_{C}(\mathcal{K})$. It avoids referential dangling if and only if
\begin{equation}
\begin{aligned}
\forall s\in\mathcal{R}_{C}(\mathcal{K})\cap\mathcal{T}_{C,Q},
\quad &\exists D\in\mathcal{D}_{C,Q}(s)\\
&\text{such that}\quad D\subseteq\mathcal{R}_{C}(\mathcal{K}).
\end{aligned}
\end{equation}
This condition is the complement of Equation~\eqref{eq:dangling} for the emitted text. The additive objective contains no interaction term that enforces it. The following construction shows that additive selection can omit required support even when the same budget admits a complete alternative.
\begin{proposition}[No guarantee of complete support]
\label{prop:closure}
Let $\mathcal{F}=\operatorname{Sent}(C)=\{x,d,z\}$ consist of three disjoint fragments with equal cost $m\in\mathbb{N}_{>0}$, and assume $\mathcal{R}_{C}(\mathcal{S})=\mathcal{S}$ for every $\mathcal{S}\subseteq\mathcal{F}$. Let $\tau=2m$, $\mathcal{T}_{C,Q}=\{x\}$, and $\mathcal{D}_{C,Q}(x)=\{\{d\}\}$. If $u(x)>u(z)>u(d)\geq0$, then the unique additive maximizer is $\mathcal{K}=\{x,z\}$, whose output dangles even though the feasible selection $\{x,d\}$ has complete support.
\end{proposition}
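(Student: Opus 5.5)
The argument is a finite enumeration over the eight subsets of $\mathcal{F}=\{x,d,z\}$, followed by a direct check of the dangling condition in Equation~\eqref{eq:dangling}.

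\textbf{Feasible sets.} Since all three fragments are disjoint with cost $m$, we have $\operatorname{cost}(\mathcal{S})=m|\mathcal{S}|$. The budget $\tau=2m$ therefore admits exactly the subsets with $|\mathcal{S}|\le 2$: the empty set, the three singletons, and the three pairs $\{x,d\}$, $\{x,z\}$, $\{d,z\}$. The full set $\{x,d,z\}$ is excluded because $3m>2m$, using $m>0$.

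\textbf{Unique maximizer.} I would show $\{x,z\}$ strictly beats every other feasible set.
\begin{itemize}
\item Against the other pairs: $u(x)+u(z)>u(x)+u(d)$ because $u(z)>u(d)$, and $u(x)+u(z)>u(d)+u(z)$ because $u(x)>u(d)$.
\item Against singletons and the empty set: since $u(z)>u(d)\ge 0$, all utilities are nonnegative and $u(x),u(z)>0$. Hence the pair $\{x,z\}$ strictly exceeds each of its own singletons. It also exceeds $\{d\}$ and $\varnothing$, whose values are at most $u(d)<u(x)$.
\end{itemize}
So the argmax is the singleton $\{\{x,z\}\}$, and the tie-breaking rule is never invoked.

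\textbf{Dangling.} By the assumption $\mathcal{R}_C(\mathcal{S})=\mathcal{S}$, we get $\mathcal{R}_C(\mathcal{K})=\{x,z\}$. Its intersection with $\mathcal{T}_{C,Q}$ is $\{x\}$. The only minimal support set for $x$ is $\{d\}$, and $\{d\}\nsubseteq\{x,z\}$, so Equation~\eqref{eq:dangling} holds with $s=x$.

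\textbf{Complete alternative.} The set $\{x,d\}$ is feasible, since its cost is $2m\le\tau$. It retains $\{x,d\}\supseteq\{d\}$, and $x$ is its only task-relevant sentence, so it satisfies the non-dangling condition.

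The only step needing any care is the uniqueness claim. There one must use the strict inequalities together with $u(d)\ge 0$, which makes $u(z)>0$ and so excludes ties with the singleton $\{x\}$. The rest is immediate.
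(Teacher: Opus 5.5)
Your proof is correct and follows the same route as the paper's: identify $\{x,z\}$ as the unique budget-feasible maximizer, observe that its retained task-relevant sentence $x$ lacks its sole support set $\{d\}$, and exhibit $\{x,d\}$ as a feasible complete alternative. Your enumeration of the seven feasible subsets makes explicit a step the paper's one-line ``two largest utilities'' argument leaves implicit: $u(d)\ge 0$ forces $u(z)>0$, which rules out a tie with the singleton $\{x\}$.
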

\noindent\emph{Proof.} The two largest utilities belong to $x$ and $z$, so the unique maximizer under the budget is $\{x,z\}$. Its retained sentence $x$ has the sole sufficient support set $\{d\}$, which is absent. The alternative $\{x,d\}$ has cost $2m$ and retains this support set, so it is feasible and complete. \hfill$\square$

The proposition is an existence result. It shows that the additive objective alone provides no guarantee for arbitrary utilities, but it does not claim that every compressed context will dangle or that a complete alternative always fits the same budget. The exact class includes context-aware~\citep{liskavets2025prompt} or reinforcement-learned scores~\citep{jung2024discrete} only when the final selector solves the stated fixed modular objective without a support constraint. Section~\ref{sec:diagnosis} evaluates recurrence empirically for selectors both inside and outside this class.

\section{Empirical Diagnosis of Referential Dangling}
\label{sec:diagnosis}

The formulation in Section~\ref{sec:formulation} motivates two empirical analyses. We first use \textsc{Beaver} to measure the prevalence of referential dangling and its variation with compression ratio, annotated hop count, and reference distance. We then compare six compressors to test whether the diagnostic recurs across scoring signals and output granularities.

\paragraph{Measurement protocol.}
Because support annotations are incomplete and exact sentence retention is not uniformly available, we use directional content-word coverage. Let $\operatorname{CW}(y)$ be the normalized content-word multiset of unit $y$, and let $\mathcal{U}(A)$ contain the retained output units. When $|\operatorname{CW}(v)|>0$ and $\mathcal{U}(A)\neq\varnothing$, define
\[
\operatorname{cov}(v,A)
=\max_{a\in\mathcal{U}(A)}
\frac{|\operatorname{CW}(v)\cap\operatorname{CW}(a)|}
     {|\operatorname{CW}(v)|}.
\]
Set $\operatorname{cov}(v,A)=0$ when $|\operatorname{CW}(v)|=0$ or $\mathcal{U}(A)=\varnothing$. Span-, sentence-, and chunk-level outputs use native retained units; token-level outputs regroup retained tokens by source sentence. Intersections count multiplicity, and the maximum is per output unit, not over their union. For $\theta\in(0,1]$, write $\operatorname{Keep}_{\theta}(v,A)$ when $\operatorname{cov}(v,A)\geq\theta$. We use $\theta=0.5$, with a sweep in Appendix~\ref{app:robustness}. This protocol approximates Equation~\eqref{eq:dangling} rather than evaluating it exactly.

In bridge questions, the definition paragraph introduces a bridge entity and the answer paragraph refers to it. At each evaluation granularity, we apply the same directional event to the paired answer and definition units: an example is answer-path dangling when $\operatorname{Keep}_{\theta}$ holds for the answer unit but not for the definition unit. The prevalence analysis uses paragraphs, whereas the cross-compressor analysis uses the corresponding source sentences. This test does not enumerate alternative support paths.

Among bridge examples whose answer paragraph satisfies $\operatorname{Keep}_{\theta}$, $\rho_{\mathrm{d}}$ is the fraction whose paired definition paragraph does not satisfy $\operatorname{Keep}_{\theta}$. The complete evidence retention rate $\rho_{\mathrm{e}}$ is the fraction of all evaluated examples retaining every annotated evidence paragraph. LongBench-v2 Single-Document QA uses a separate first-mention diagnostic.

\subsection{Referential Dangling under \textsc{Beaver}}
\label{sec:prevalence}

\begin{figure*}[t]\centering
\includegraphics[width=0.9\textwidth]{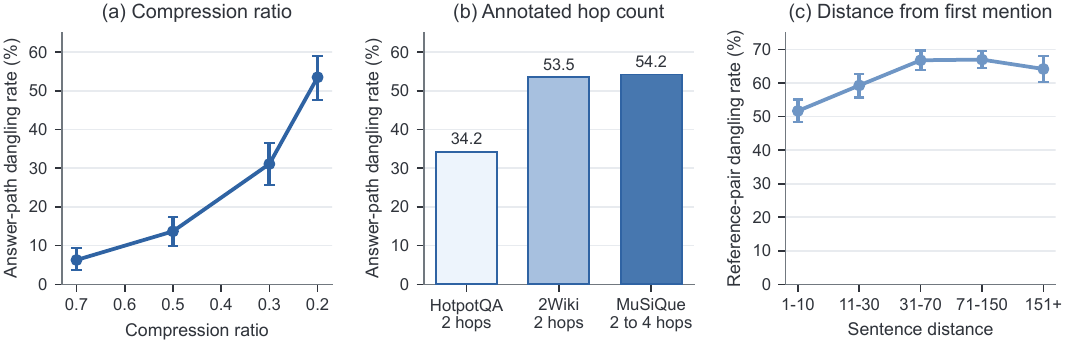}
\caption{Referential dangling under \textsc{Beaver}. Panel (a) reports $\rho_{\mathrm{d}}$ across compression ratios on HotpotQA ($n{=}269$ to $300$ per point, including partial paragraph retention). Panel (b) reports $\rho_{\mathrm{d}}$ by annotated hop count on HotpotQA ($n{=}234$), 2WikiMultiHopQA ($n{=}241$), and MuSiQue ($n{=}286$). Panel (c) reports dangling rates for $4{,}649$ reference pairs in LongBench-v2 Single-Document QA by sentence distance from first mention to later reference. Error bars are bootstrap $95\%$ confidence intervals.}
\vspace{-5pt}
\label{fig:dose}\end{figure*}

\paragraph{Setup.} We use the official \textsc{Beaver} implementation~\citep{beaver2026} with Qwen3-0.6B embeddings~\citep{yang2025qwen3,qwen3embedding2025}. We evaluate HotpotQA~\citep{yang2018hotpotqa}, 2WikiMultiHopQA~\citep{ho2020constructing}, and MuSiQue~\citep{trivedi2022musique}. HotpotQA and 2WikiMultiHopQA use contexts of about $40$ paragraphs and $5.7$k tokens, formed by mixing annotated supporting paragraphs with sampled distractors; MuSiQue uses its native $20$-paragraph contexts. We define $r=|\tilde C|/|C|$ and use binary search to reach each target ratio.

\paragraph{Prevalence.} At $r=0.30$, $\rho_{\mathrm{d}}$ ranges from $34.2\%$ to $54.2\%$ across the three datasets, while $\rho_{\mathrm{e}}$ ranges from $27.0\%$ to $61.0\%$ (Table~\ref{tab:prevalence}). MuSiQue has the highest $\rho_{\mathrm{d}}$ and the lowest $\rho_{\mathrm{e}}$. Figure~\ref{fig:dose}a uses a broader HotpotQA sample that includes partial paragraph retention ($n{=}293$ at $r=0.30$) and reports $31.1\%$, compared with $34.2\%$ for the $234$-example table sample. In a manual audit, the omitted paragraph was required in $38$ of $40$ flagged examples, yielding $95\%$ precision. Appendix~\ref{app:gallery} reports the dependency categories, false positives, and qualitative examples.

\begin{table}[t]
\centering
\setlength{\tabcolsep}{5pt}
\begin{tabular}{lccc}
\toprule
Dataset & Hops & $\rho_{\mathrm{d}}$ (\%) & $\rho_{\mathrm{e}}$ (\%) \\
\midrule
HotpotQA & 2 & 34.2 & 61.0 \\
2WikiMultiHopQA & 2 & 53.5 & 30.7 \\
MuSiQue & 2 to 4 & 54.2 & 27.0 \\
\bottomrule
\end{tabular}
\caption{Referential dangling and complete evidence retention under \textsc{Beaver} at $r=0.30$.}
\vspace{-10pt}
\label{tab:prevalence}
\end{table}

\paragraph{Long documents.} LongBench-v2 Single-Document QA~\citep{bai2025longbenchv2} lacks annotations of supporting evidence and multi-hop structure, so we use a separate first-mention diagnostic. A later reference dangles when its entity's first-mention sentence is omitted while the reference sentence survives. Across $80$ documents at $r=0.30$, the mean per-document rate is $30.5\%$, and every document contains at least one dangling reference. Rates range from $25\%$ to $37\%$ across seven subdomains (Appendix~\ref{app:longdoc}). Because this diagnostic uses retained sentences as its denominator rather than bridge examples with a retained answer paragraph, it is not directly comparable with $\rho_{\mathrm{d}}$.

\paragraph{Variation across measured conditions.} On the broader HotpotQA sample in Figure~\ref{fig:dose}a, $\rho_{\mathrm{d}}$ rises from $6.3\%$ at $r=0.70$ to $53.5\%$ at $r=0.20$, with nonoverlapping bootstrap $95\%$ confidence intervals between adjacent operating points. This endpoint is distinct from the $53.5\%$ reported for 2WikiMultiHopQA in Table~\ref{tab:prevalence}. Dataset groups with larger annotated hop counts also have larger $\rho_{\mathrm{d}}$ values (Figure~\ref{fig:dose}b), but dataset construction differs, so this comparison does not isolate reasoning depth. On LongBench-v2 Single-Document QA, the pair-level dangling rate is $51.7\%$ for distances of $1$ to $10$ sentences, $67.0\%$ for $71$ to $150$, and $64.2\%$ beyond $150$. Dangling pairs have a mean distance of $89$ sentences, compared with $73$ for complete pairs.

\subsection{Comparison Across Six Scoring Signals}
\label{sec:paradigm}

The comparison includes six hard compressors that assign importance using different signals: embedding similarity with \textsc{Beaver}~\citep{beaver2026}, syntactic parse structure with PartPrompt~\citep{mao2025partprompt}, self-information with Selective-Context~\citep{li2023selective}, a learned token classifier with LLMLingua-2~\citep{pan2024llmlingua2}, perplexity with LongLLMLingua~\citep{jiang2024longllmlingua}, and importance derived from attention with DAC~\citep{zhao2025dac}. These methods span chunk, parse node, sentence, and token outputs. \textsc{Beaver} and LongLLMLingua use the query, whereas the other four methods do not. The procedures differ, but none explicitly constrains joint dependency retention.

All six compressors process the same $184$ HotpotQA bridge examples at compression ratio $0.30$. We apply the sentence-level predicate $\operatorname{Keep}_{0.5}$ defined above to outputs at every granularity. Under this criterion, \textsc{Beaver} has a dangling rate of $32.1\%$ on the shared set. Table~\ref{tab:prevalence} reports $34.2\%$ under the paragraph-level predicate on a different $234$-example sample, so the two values are not a controlled comparison of the predicates. Appendix~\ref{app:robustness} sweeps the threshold from $0.3$ to $0.7$; Appendix~\ref{app:compressor-configs} details each compressor's implementation.

\begin{figure}[t]\centering
\includegraphics[width=0.95\columnwidth]{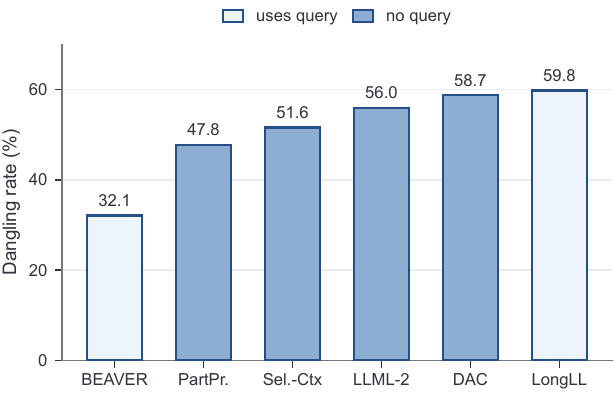}
\caption{Dangling rates for six compressors on the shared HotpotQA bridge set ($n{=}184$) at compression ratio $0.30$. PartPr. denotes PartPrompt, Sel.-Ctx denotes Selective-Context, LLML-2 denotes LLMLingua-2, and LongLL denotes LongLLMLingua. All outputs are evaluated using the content-word overlap criterion with threshold $0.5$. Light bars denote methods that use the query, and darker bars denote methods that do not.}
\label{fig:paradigm}\vspace{-10pt}\end{figure}

\paragraph{Results across scoring signals.} Figure~\ref{fig:paradigm} reports rates from $32.1\%$ for \textsc{Beaver} to $59.8\%$ for LongLLMLingua. Referential dangling is therefore observed beyond selection based on embeddings. The two methods that use the query, \textsc{Beaver} and LongLLMLingua, have rates at opposite ends of the measured range, so query access alone does not remove the diagnostic event. PartPrompt has a rate of $47.8\%$ on the shared set, indicating that the event also occurs under hierarchical syntactic selection in this setting. These comparisons extend the diagnosis beyond the exact additive model without implying that every evaluated selector instantiates it.

\paragraph{Definition salience.} We next examine whether omitted definitions simply receive low importance scores. For each bridge example, we measure the isolated salience percentile of the answer sentence and the definition sentence among all sentences using \textsc{Beaver} sentence similarity, which uses the query, and self-information, which does not. Figure~\ref{fig:mechanism} reports the results for $180$ examples. Under \textsc{Beaver} similarity, the mean percentile is $92.8\%$ for definitions and $84.1\%$ for answer sentences. Under self-information, the corresponding values are $46.7\%$ and $54.3\%$. Low isolated salience is therefore insufficient to explain the omitted definitions, although these measurements do not identify the reason for each individual selection decision.

\begin{figure}[t]\centering
\includegraphics[width=0.9\columnwidth]{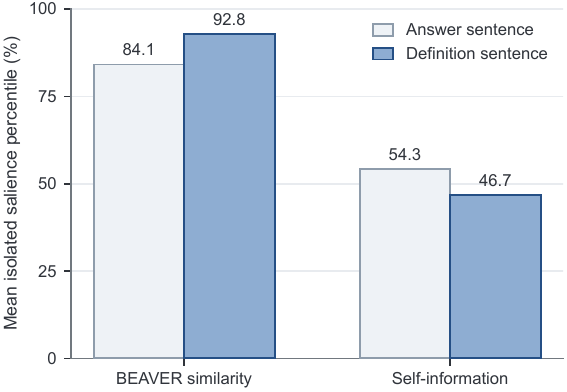}
\caption{Mean salience percentiles (\%) for answer and definition sentences among all sentences in $180$ bridge examples. \textsc{Beaver} similarity is query-aware; self-information is not.}
\label{fig:mechanism}\vspace{-10pt}\end{figure}

\paragraph{Overlap of affected examples.} We compare the dangling case sets using Jaccard similarity to determine whether the compressors fail on the same examples. Table~\ref{tab:cross_compressor_overlap} reports a mean pairwise similarity of $0.38$, compared with $0.34$ expected under independent occurrence at the observed rates. Only $5.4\%$ of examples dangle under all six compressors, whereas $95\%$ dangle under at least one. The affected sets therefore vary across scoring methods, a pattern consistent with selection that varies by method rather than a single fixed pool of difficult examples. These measurements quantify prevalence under the diagnostic but do not determine whether changing the selected content at the same budget improves downstream accuracy, which we evaluate next.

\begin{table}[t]
\centering
\footnotesize
\setlength{\tabcolsep}{2pt}
\begin{tabular}{lcccccc}
\toprule
& BEAVER & PartPr. & Sel.-Ctx & LLML-2 & DAC & LongLL \\
\midrule
\makecell[l]{Dangling\\rate (\%)} & 32.1 & 47.8 & 51.6 & 56.0 & 58.7 & 59.8 \\
\midrule
BEAVER & \NA & 0.36 & 0.23 & 0.30 & 0.29 & 0.32 \\
PartPr. & 0.36 & \NA & 0.36 & 0.44 & 0.44 & 0.37 \\
Sel.-Ctx & 0.23 & 0.36 & \NA & 0.39 & 0.35 & 0.51 \\
LLML-2 & 0.30 & 0.44 & 0.39 & \NA & 0.47 & 0.45 \\
DAC & 0.29 & 0.44 & 0.35 & 0.47 & \NA & 0.48 \\
LongLL & 0.32 & 0.37 & 0.51 & 0.45 & 0.48 & \NA \\
\bottomrule
\end{tabular}
\caption{Pairwise Jaccard similarities between dangling case sets on the shared HotpotQA bridge set ($n{=}184$) at compression ratio $0.30$. The first row reports the dangling rate of each compressor. Abbreviations match Figure~\ref{fig:paradigm}.}
\label{tab:cross_compressor_overlap}
\vspace{-10pt}
\end{table}

\section{Fixed-Budget Content Reselection}
\label{sec:reselection}

The preceding diagnostics show that compression often separates annotated support pairs, but prevalence alone does not determine whether selecting different content improves downstream accuracy without increasing the token budget. We test this by restoring the omitted supporting paragraph and removing at least as many tokens from paragraphs that are not annotated as supporting the answer.

\paragraph{Comparison protocol.} Panel 1 of Table~\ref{tab:reselection} evaluates three content conditions on the dangling subsets. Base is the original compressed context. Reselected reinserts the omitted supporting paragraph and removes the lowest scoring paragraphs not annotated as supporting the answer until at least the same number of tokens has been removed. Full support retains both annotated supporting paragraphs and provides a reference for complete evidence retention. Base and Reselected form the fixed-budget comparison, while Full support provides the reference used to quantify the accuracy recovered through reselection. Panel 2 separately compares Base with Full support on $200$ HotpotQA examples and does not evaluate Reselected. We conduct these comparisons with \textsc{Beaver} because its coherent chunk output permits paragraph replacement while leaving the remaining retained text unchanged. Applying the same operation to token-level output would require inserting whole paragraphs into fragmented text and would not yield a directly comparable setting. Appendix D specifies the construction for each dataset.

\begin{table*}[t]
\centering
\begin{tabular}{llcccc}
\toprule
\multicolumn{6}{l}{\emph{Panel 1: dangling subsets evaluated with Qwen3-8B}} \\
Dataset & Downstream LLM & Base & Reselected & Full support & McNemar $p$ \\
\midrule
HotpotQA & Qwen3-8B & $0.287$ & $0.575$ & $0.600$ & $1.6\times10^{-6}$ \\
2WikiMultiHopQA & Qwen3-8B & $0.097$ & $0.403$ & $0.444$ & $1.1\times10^{-5}$ \\
MuSiQue & Qwen3-8B & $0.147$ & $0.490$ & $0.471$ & $3.1\times10^{-8}$ \\
\midrule
\multicolumn{6}{l}{\emph{Panel 2: a $200$ example HotpotQA evaluation set with four downstream LLMs}} \\
Dataset & Downstream LLM & Base & \multicolumn{2}{c}{Full support} & McNemar $p$ \\
\midrule
HotpotQA & Qwen3-8B & $0.535$ & \multicolumn{2}{c}{$0.615$} & $0.001$ \\
& Qwen3-4B & $0.500$ & \multicolumn{2}{c}{$0.585$} & $0.002$ \\
& Llama-3.1-8B & $0.575$ & \multicolumn{2}{c}{$0.660$} & $0.004$ \\
& Mistral-7B & $0.455$ & \multicolumn{2}{c}{$0.545$} & $0.0005$ \\
\bottomrule
\end{tabular}
\caption{Answer accuracy with base contexts produced by \textsc{Beaver} at target compression ratio $0.30$. Panel 1 uses the dangling subsets of HotpotQA ($n{=}80$), 2WikiMultiHopQA ($n{=}72$), and MuSiQue ($n{=}102$), with McNemar $p$ values comparing Base and Reselected. Panel 2 uses a separate set of $200$ HotpotQA examples, with McNemar $p$ values comparing Base and Full support.}
\label{tab:reselection}
\end{table*}

We evaluate GPT-5.5~\citep{openai2026models} and GLM-5.2~\citep{zhipu2026glm52} as proprietary downstream models. Their API identifiers are listed in Table~\ref{tab:checkpoints}.

\begin{table}[t]
\centering
\footnotesize
\setlength{\tabcolsep}{3pt}
\begin{tabular}{llcccc}
\toprule
Model & Dataset & $n$ & Base & Full support & McNemar $p$ \\
\midrule
GPT-5.5 & HotpotQA & $184$ & $0.913$ & $0.913$ & $1.0$ \\
GPT-5.5 & MuSiQue & $102$ & $0.775$ & $0.863$ & $0.011$ \\
GLM-5.2 & MuSiQue & $95$ & $0.695$ & $0.937$ & $2.4\times10^{-7}$ \\
\bottomrule
\end{tabular}
\caption{Answer accuracy of proprietary models under Base and Full support, with base contexts produced by \textsc{Beaver} at target compression ratio $0.30$. HotpotQA uses the full shared bridge set, while MuSiQue uses the dangling subset. GLM-5.2 returned answers for $95$ of the $102$ MuSiQue contexts because of API timeouts.}
\label{tab:closed}
\vspace{-6pt}
\end{table}

\paragraph{Accuracy on dangling examples.} The dangling subsets are defined by paragraph retention and do not depend on whether the downstream model answers correctly. Panel 1 of Table~\ref{tab:reselection} shows that Reselected improves accuracy over Base by $28.8$ to $34.3$ points, with $p<10^{-4}$ on all three datasets. It recovers $92\%$ and $88\%$ of the difference between Base and Full support on HotpotQA and 2WikiMultiHopQA, respectively, and exceeds Full support by $1.9$ points on MuSiQue. Because Reselected and Full support remove different paragraphs that are not annotated as supporting the answer, these comparisons measure the joint change in retained content rather than the isolated contribution of the reinserted paragraph. The reselected contexts are shorter than the base contexts on average, as reported in Appendix D, so their gains cannot be explained by a larger token budget. The protocol relies on annotated supporting paragraphs, and we apply it to all three multi-hop QA datasets used in the prevalence analysis.

\paragraph{Results across downstream models.} We next compare Base and Full support across Qwen3, Llama-3.1, and Mistral models~\citep{yang2025qwen3,dubey2024llama3,jiang2023mistral}. Exact checkpoint and API identifiers are listed in Table~\ref{tab:checkpoints}. Panel 2 of Table~\ref{tab:reselection} reports gains of $8.0$ to $9.0$ accuracy points for Full support, with $p<0.01$ for all four models. These comparisons remain significant after Holm correction at $\alpha{=}0.05$. The gains for Qwen3-4B and Qwen3-8B are similar, at $8.5$ and $8.0$ points, respectively. Additional comparisons with proprietary models are reported in Table~\ref{tab:closed}. On HotpotQA, GPT-5.5 has the same aggregate accuracy under both conditions because restoring both supporting paragraphs produces four fixes and four breaks. On MuSiQue, Full support improves accuracy by $8.8$ points for GPT-5.5 and $24.2$ points for GLM-5.2. Full support improves accuracy for all four evaluated open-weight models on HotpotQA and both proprietary models on MuSiQue, while GPT-5.5 shows no aggregate difference on HotpotQA.

\section{Automatic Context Restoration}
\label{sec:method}

The fixed-budget comparison relies on annotated supporting paragraphs to determine which content to restore. We therefore test whether omitted supporting sentences can be selected automatically at inference with only a small increase in the token budget. We treat the resulting pipeline as a diagnostic of targeted sentence restoration rather than a complete compression system. We evaluate it on \textsc{Beaver} outputs because their coherent blocks containing multiple sentences permit controlled insertion at sentence boundaries while leaving the remaining compressed context unchanged.

\paragraph{Restoration pipeline and training.} A candidate generator first collects sentences from the omitted context that may support retained text. A \texttt{bert-base-uncased} classifier~\citep{devlin2019bert} then ranks the candidates using a retained sentence, a candidate sentence, and the question, separated by \texttt{[SEP]} tokens. Training pairs are constructed from the HotpotQA training split. Positive pairs consist of a retained sentence and an omitted annotated supporting sentence that share an entity. Pairs that share an entity but do not meet this positive criterion serve as hard negatives, whereas pairs without a shared entity serve as easy negatives. After negative downsampling, the training set contains $7{,}565$ pairs, of which $40\%$ are positive. Pairs are split by example identifier into classifier training and development partitions, and the source contexts are disjoint from the downstream evaluation contexts. We fine-tune the classifier for three epochs with AdamW using a learning rate of $2\times10^{-5}$, a batch size of $32$, and a maximum sequence length of $256$, and retain the checkpoint with the highest development F1. At inference, the $K$ candidates with the highest classifier scores are reinserted. We use $K{=}3$ in the main analyses; Appendix~\ref{app:detector} reports the $K$ sweep and remaining implementation details.

\paragraph{Downstream accuracy.} With first-mention candidates, the restoration pipeline improves Qwen3-8B accuracy by $4.7$ points ($p{=}0.022$) while changing the compression ratio from $0.30$ to $0.31$. The same procedure improves Mistral-7B accuracy by $6.5$ points ($p{=}0.012$). On Llama-3.1-8B, the gain ranges from $1.3$ to $2.3$ points across candidate sources and is not statistically significant. Subsequent analyses use Qwen3-8B; Appendix G reports all downstream-model results.

\paragraph{Candidate sources.} Table~\ref{tab:ablation} compares candidate sources while holding the classifier fixed. First-mention candidates yield a gain of $4.7$ points on HotpotQA but only $0.5$ points on 2WikiMultiHopQA. Combining first-mention candidates with embedding retrieval gives point estimates of $4.5$ and $5.5$ points, respectively. In a diagnostic condition that includes the annotated supporting sentence in the candidate set, the gain reaches $8.0$ points ($p{=}0.008$) while adding only $0.4$ sentences on average. The larger point estimate suggests that candidate construction limits the current pipeline. Appendix G further compares cases with successful and unsuccessful restoration.

\begin{table}[t]
\centering
\footnotesize
\setlength{\tabcolsep}{1.5pt}
\begin{tabular}{lcc}
\toprule
Candidate source & HotpotQA & 2WikiMultiHopQA \\
\midrule
First mention & +4.7 ($p{=}0.022$) & \makecell{+0.5\\(not significant)} \\
All mentions & +4.5 ($p{=}0.15$) & +4.0 ($p{=}0.20$) \\
Hybrid & +4.5 ($p{=}0.12$) & +5.5 ($p{=}0.063$) \\
\midrule
Annotated support included & \textbf{+8.0} ($p{=}0.008$) & \NA \\
\bottomrule
\end{tabular}
\caption{Changes in answer accuracy, in percentage points relative to Base, for candidate sources with a fixed classifier and Qwen3-8B ($K{=}3$). Hybrid augments first-mention candidates with embedding retrieval, and the final row includes the annotated supporting sentence in the candidate set.}
\label{tab:ablation}
\vspace{-10pt}
\end{table}

\paragraph{Matched addition control.} Adding the same number of randomly selected sentences improves accuracy by $2.0$ points, compared with $4.7$ points for targeted restoration (Appendix G). The direct contrast is suggestive but not significant at this sample size ($p{=}0.15$); it therefore does not establish an advantage over random insertion.

\paragraph{Transfer across compressors.} Applying the same restoration configuration to three additional compressor outputs yields smaller gains that are not statistically significant. Because the configuration is calibrated on \textsc{Beaver} and HotpotQA and output granularity varies, transfer to other compressors remains unresolved (Appendix H).

\section{Conclusion}

We identify referential dangling, in which independent hard compression retains task-relevant text but removes support required for interpretation. At $r=0.30$, paragraph-level dangling occurs in $34\%$ to $54\%$ of bridge examples with a retained answer paragraph under \textsc{Beaver} across three multi-hop QA datasets. The event also recurs across six compressors on HotpotQA, and a separate LongBench-v2 Single-Document QA diagnostic flags every evaluated document. On affected \textsc{Beaver} examples, fixed-budget reselection improves Qwen3-8B accuracy by $29$ to $34$ points, and automatic restoration adds $4.7$ points on HotpotQA while changing the compression ratio from $0.30$ to $0.31$. These findings motivate dependency-preserving selection, although transfer beyond \textsc{Beaver} and annotated-support QA remains unresolved.

\bibliography{aaai2027}

@article{liu2024lost,
  title={Lost in the middle: How language models use long contexts},
  author={Liu, Nelson F and Lin, Kevin and Hewitt, John and Paranjape, Ashwin and Bevilacqua, Michele and Petroni, Fabio and Liang, Percy},
  journal={Transactions of the association for computational linguistics},
  volume={12},
  pages={157--173},
  year={2024}
}

@inproceedings{shi2023large,
  title={Large language models can be easily distracted by irrelevant context},
  author={Shi, Freda and Chen, Xinyun and Misra, Kanishka and Scales, Nathan and Dohan, David and Chi, Ed H and Sch{\"a}rli, Nathanael and Zhou, Denny},
  booktitle={International Conference on Machine Learning},
  pages={31210--31227},
  year={2023},
  organization={PMLR}
}

@inproceedings{jiang2023llmlingua,
  title={Llmlingua: Compressing prompts for accelerated inference of large language models},
  author={Jiang, Huiqiang and Wu, Qianhui and Lin, Chin-Yew and Yang, Yuqing and Qiu, Lili},
  booktitle={Proceedings of the 2023 conference on empirical methods in natural language processing},
  pages={13358--13376},
  year={2023}
}

@inproceedings{pan2024llmlingua2,
  title={Llmlingua-2: Data distillation for efficient and faithful task-agnostic prompt compression},
  author={Pan, Zhuoshi and Wu, Qianhui and Jiang, Huiqiang and Xia, Menglin and Luo, Xufang and Zhang, Jue and Lin, Qingwei and R{\"u}hle, Victor and Yang, Yuqing and Lin, Chin-Yew and others},
  booktitle={Findings of the Association for Computational Linguistics: ACL 2024},
  pages={963--981},
  year={2024}
}

@inproceedings{jiang2024longllmlingua,
  title={Longllmlingua: Accelerating and enhancing llms in long context scenarios via prompt compression},
  author={Jiang, Huiqiang and Wu, Qianhui and Luo, Xufang and Li, Dongsheng and Lin, Chin-Yew and Yang, Yuqing and Qiu, Lili},
  booktitle={Proceedings of the 62nd Annual Meeting of the Association for Computational Linguistics (Volume 1: Long Papers)},
  pages={1658--1677},
  year={2024}
}

@inproceedings{li2023selective,
  title={Compressing context to enhance inference efficiency of large language models},
  author={Li, Yucheng and Dong, Bo and Guerin, Frank and Lin, Chenghua},
  booktitle={Proceedings of the 2023 conference on empirical methods in natural language processing},
  pages={6342--6353},
  year={2023}
}

@misc{beaver2026,
  title={BEAVER: A Training-Free Hierarchical Prompt Compression Method via Structure-Aware Page Selection},
  author={Hu, Zhengpei and Li, Kai and Fu, Dapeng and Zeng, Chang and Li, Yue and Tang, Yuanhao and Huang, Jianqiang},
  journal={arXiv preprint arXiv:2603.19635},
  year={2026}
}

@article{mao2025partprompt,
  title={Parse trees guided LLM prompt compression},
  author={Mao, Wenhao and Hou, Chengbin and Zhang, Tianyu and Lin, Xinyu and Tang, Ke and Lv, Hairong},
  journal={IEEE Transactions on Pattern Analysis and Machine Intelligence},
  year={2025},
  publisher={IEEE}
}

@inproceedings{zhao2025dac,
  title={DAC: A dynamic attention-aware approach for task-agnostic prompt compression},
  author={Zhao, Yi and Li, Zuchao and Zhao, Hai and Qi, Baoyuan and Guoming, Liu},
  booktitle={Proceedings of the 63rd Annual Meeting of the Association for Computational Linguistics (Volume 1: Long Papers)},
  pages={19395--19407},
  year={2025}
}

@inproceedings{xu2024recomp,
  title={Recomp: Improving retrieval-augmented lms with context compression and selective augmentation},
  author={Xu, Fangyuan and Shi, Weijia and Choi, Eunsol},
  booktitle={International Conference on Learning Representations},
  volume={2024},
  pages={43478--43502},
  year={2024}
}

@article{jung2024discrete,
  title={Discrete prompt compression with reinforcement learning},
  author={Jung, Hoyoun and Kim, Kyung-Joong},
  journal={IEEE Access},
  volume={12},
  pages={72578--72587},
  year={2024},
  publisher={IEEE}
}

@inproceedings{liskavets2025prompt,
  title={Prompt compression with context-aware sentence encoding for fast and improved llm inference},
  author={Liskavets, Barys and Ushakov, Maxim and Roy, Shuvendu and Klibanov, Mark and Etemad, Ali and Luke, Shane K},
  booktitle={Proceedings of the AAAI Conference on Artificial Intelligence},
  volume={39},
  number={23},
  pages={24595--24604},
  year={2025}
}

@article{mu2023learning,
  title={Learning to compress prompts with gist tokens},
  author={Mu, Jesse and Li, Xiang and Goodman, Noah},
  journal={Advances in Neural Information Processing Systems},
  volume={36},
  pages={19327--19352},
  year={2023}
}

@inproceedings{chevalier2023adapting,
  title={Adapting language models to compress contexts},
  author={Chevalier, Alexis and Wettig, Alexander and Ajith, Anirudh and Chen, Danqi},
  booktitle={Proceedings of the 2023 Conference on Empirical Methods in Natural Language Processing},
  pages={3829--3846},
  year={2023}
}

@misc{ge2024incontext,
  title={In-context autoencoder for context compression in a large language model},
  author={Ge, Tao and Hu, Jing and Wang, Lei and Wang, Xun and Chen, Si-Qing and Wei, Furu},
  journal={arXiv preprint arXiv:2307.06945},
  year={2023}
}

@inproceedings{li2025surveyprompt,
  title={Prompt compression for large language models: A survey},
  author={Li, Zongqian and Liu, Yinhong and Su, Yixuan and Collier, Nigel},
  booktitle={Proceedings of the 2025 Conference of the Nations of the Americas Chapter of the Association for Computational Linguistics: Human Language Technologies (Volume 1: Long Papers)},
  pages={7182--7195},
  year={2025}
}

@inproceedings{li2024subnetwork,
  title={Subnetwork-to-go: Elastic neural network with dynamic training and customizable inference},
  author={Li, Kai and Luo, Yi},
  booktitle={ICASSP 2024-2024 IEEE International Conference on Acoustics, Speech and Signal Processing (ICASSP)},
  pages={6775--6779},
  year={2024},
  organization={IEEE}
}

@misc{li2023tdanet,
  title={An efficient encoder-decoder architecture with top-down attention for speech separation},
  author={Li, Kai and Yang, Runxuan and Hu, Xiaolin},
  journal={arXiv preprint arXiv:2209.15200},
  year={2022}
}

@inproceedings{li2025spmamba,
  title={SPMamba: Leveraging Long-Sequence Modeling with State Space Models for Speech Separation},
  author={Li, Kai and Chen, Guo and Yang, Runxuan and Hu, Xiaolin},
  booktitle={2025 IEEE International Conference on Multimedia and Expo (ICME)},
  pages={1--6},
  year={2025},
  organization={IEEE}
}

@misc{li2026dolphin,
  title={Efficient Audio-Visual Speech Separation with Discrete Lip Semantics and Multi-Scale Global-Local Attention},
  author={Li, Kai and Gao, Kejun and Hu, Xiaolin},
  journal={arXiv preprint arXiv:2509.23610},
  year={2025}
}

@inproceedings{yang2018hotpotqa,
  title={HotpotQA: A dataset for diverse, explainable multi-hop question answering},
  author={Yang, Zhilin and Qi, Peng and Zhang, Saizheng and Bengio, Yoshua and Cohen, William and Salakhutdinov, Ruslan and Manning, Christopher D},
  booktitle={Proceedings of the 2018 conference on empirical methods in natural language processing},
  pages={2369--2380},
  year={2018}
}

@inproceedings{ho2020constructing,
  title={Constructing a multi-hop qa dataset for comprehensive evaluation of reasoning steps},
  author={Ho, Xanh and Nguyen, Anh-Khoa Duong and Sugawara, Saku and Aizawa, Akiko},
  booktitle={Proceedings of the 28th International Conference on Computational Linguistics},
  pages={6609--6625},
  year={2020}
}

@article{trivedi2022musique,
  title={MuSiQue: Multihop Questions via Single-hop Question Composition},
  author={Trivedi, Harsh and Balasubramanian, Niranjan and Khot, Tushar and Sabharwal, Ashish},
  journal={Transactions of the Association for Computational Linguistics},
  volume={10},
  pages={539--554},
  year={2022},
  publisher={MIT Press One Broadway, 12th Floor, Cambridge, Massachusetts 02142, USA~…}
}

@inproceedings{bai2025longbenchv2,
  title={Longbench v2: Towards deeper understanding and reasoning on realistic long-context multitasks},
  author={Bai, Yushi and Tu, Shangqing and Zhang, Jiajie and Peng, Hao and Wang, Xiaozhi and Lv, Xin and Cao, Shulin and Xu, Jiazheng and Hou, Lei and Dong, Yuxiao and others},
  booktitle={Proceedings of the 63rd Annual Meeting of the Association for Computational Linguistics (Volume 1: Long Papers)},
  pages={3639--3664},
  year={2025}
}

@article{zhang2023h2o,
  title={H2o: Heavy-hitter oracle for efficient generative inference of large language models},
  author={Zhang, Zhenyu and Sheng, Ying and Zhou, Tianyi and Chen, Tianlong and Zheng, Lianmin and Cai, Ruisi and Song, Zhao and Tian, Yuandong and R{\'e}, Christopher and Barrett, Clark and others},
  journal={Advances in Neural Information Processing Systems},
  volume={36},
  pages={34661--34710},
  year={2023}
}

@inproceedings{durrett2016learning,
  title={Learning-based single-document summarization with compression and anaphoricity constraints},
  author={Durrett, Greg and Berg-Kirkpatrick, Taylor and Klein, Dan},
  booktitle={Proceedings of the 54th Annual Meeting of the Association for Computational Linguistics (Volume 1: Long Papers)},
  pages={1998--2008},
  year={2016}
}

@inproceedings{lee2017endtoend,
  title={End-to-end neural coreference resolution},
  author={Lee, Kenton and He, Luheng and Lewis, Mike and Zettlemoyer, Luke},
  booktitle={Proceedings of the 2017 conference on empirical methods in natural language processing},
  pages={188--197},
  year={2017}
}

@inproceedings{maynez2020faithfulness,
  title={On faithfulness and factuality in abstractive summarization},
  author={Maynez, Joshua and Narayan, Shashi and Bohnet, Bernd and McDonald, Ryan},
  booktitle={Proceedings of the 58th annual meeting of the association for computational linguistics},
  pages={1906--1919},
  year={2020}
}

@article{lewis2020retrieval,
  title={Retrieval-augmented generation for knowledge-intensive nlp tasks},
  author={Lewis, Patrick and Perez, Ethan and Piktus, Aleksandra and Petroni, Fabio and Karpukhin, Vladimir and Goyal, Naman and K{\"u}ttler, Heinrich and Lewis, Mike and Yih, Wen-tau and Rockt{\"a}schel, Tim and others},
  journal={Advances in neural information processing systems},
  volume={33},
  pages={9459--9474},
  year={2020}
}

@misc{dubey2024llama3,
  title={The llama 3 herd of models},
  author={Grattafiori, Aaron and Dubey, Abhimanyu and Jauhri, Abhinav and Pandey, Abhinav and Kadian, Abhishek and Al-Dahle, Ahmad and Letman, Aiesha and Mathur, Akhil and Schelten, Alan and Vaughan, Alex and others},
  journal={arXiv preprint arXiv:2407.21783},
  year={2024}
}

@misc{yang2025qwen3,
  title={Qwen3 technical report},
  author={Yang, An and Li, Anfeng and Yang, Baosong and Zhang, Beichen and Hui, Binyuan and Zheng, Bo and Yu, Bowen and Gao, Chang and Huang, Chengen and Lv, Chenxu and others},
  journal={arXiv preprint arXiv:2505.09388},
  year={2025}
}

@misc{jiang2023mistral,
      title={Mistral 7B}, 
      author={Albert Q. Jiang and Alexandre Sablayrolles and Arthur Mensch and Chris Bamford and Devendra Singh Chaplot and Diego de las Casas and Florian Bressand and Gianna Lengyel and Guillaume Lample and Lucile Saulnier and Lélio Renard Lavaud and Marie-Anne Lachaux and Pierre Stock and Teven Le Scao and Thibaut Lavril and Thomas Wang and Timothée Lacroix and William El Sayed},
      year={2023},
      eprint={2310.06825},
      archivePrefix={arXiv},
      primaryClass={cs.CL},
      url={https://arxiv.org/abs/2310.06825}, 
}

@inproceedings{devlin2019bert,
  title={Bert: Pre-training of deep bidirectional transformers for language understanding},
  author={Devlin, Jacob and Chang, Ming-Wei and Lee, Kenton and Toutanova, Kristina},
  booktitle={Proceedings of the 2019 conference of the North American chapter of the association for computational linguistics: human language technologies, volume 1 (long and short papers)},
  pages={4171--4186},
  year={2019}
}

@misc{qwen3embedding2025,
  title={Qwen3 embedding: Advancing text embedding and reranking through foundation models},
  author={Zhang, Yanzhao and Li, Mingxin and Long, Dingkun and Zhang, Xin and Lin, Huan and Yang, Baosong and Xie, Pengjun and Yang, An and Liu, Dayiheng and Lin, Junyang and others},
  journal={arXiv preprint arXiv:2506.05176},
  year={2025}
}

@misc{touvron2023llama2,
  title={Llama 2: Open foundation and fine-tuned chat models},
  author={Touvron, Hugo and Martin, Louis and Stone, Kevin and Albert, Peter and Almahairi, Amjad and Babaei, Yasmine and Bashlykov, Nikolay and Batra, Soumya and Bhargava, Prajjwal and Bhosale, Shruti and others},
  journal={arXiv preprint arXiv:2307.09288},
  year={2023}
}

@inproceedings{liu2023tcrallm,
  title={Tcra-llm: Token compression retrieval augmented large language model for inference cost reduction},
  author={Liu, Junyi and Li, Liangzhi and Xiang, Tong and Wang, Bowen and Qian, Yiming},
  booktitle={Findings of the association for computational linguistics: EMNLP 2023},
  pages={9796--9810},
  year={2023}
}

@inproceedings{fei2024semantic,
  title={Extending context window of large language models via semantic compression},
  author={Fei, Weizhi and Niu, Xueyan and Zhou, Pingyi and Hou, Lu and Bai, Bo and Deng, Lei and Han, Wei},
  booktitle={Findings of the Association for Computational Linguistics: ACL 2024},
  pages={5169--5181},
  year={2024}
}

@inproceedings{shandilya2025tacorl,
  title={Taco-rl: Task aware prompt compression optimization with reinforcement learning},
  author={Shandilya, Shivam and Xia, Menglin and Ghosh, Supriyo and Jiang, Huiqiang and Zhang, Jue and Wu, Qianhui and R{\"u}hle, Victor and Rajmohan, Saravan},
  booktitle={Findings of the Association for Computational Linguistics: ACL 2025},
  pages={1582--1597},
  year={2025}
}

@article{lin2025keydensity,
  title={Prompt Compression based on Key-Information Density},
  author={Lin, Yuhao and Guo, Wenya and Zhang, Ying and Yang, Chengyi and Li, Zengxiang},
  journal={Expert Systems with Applications},
  volume={284},
  pages={127738},
  year={2025},
  publisher={Elsevier}
}

@misc{shandilya2024characterizing,
  title={Characterizing prompt compression methods for long context inference},
  author={Jha, Siddharth and Erdogan, Lutfi Eren and Kim, Sehoon and Keutzer, Kurt and Gholami, Amir},
  journal={arXiv preprint arXiv:2407.08892},
  year={2024}
}

@article{nagle2024ratdist,
  title={Fundamental limits of prompt compression: A rate-distortion framework for black-box language models},
  author={Nagle, Alliot and Girish, Adway and Bondaschi, Marco and Gastpar, Michael and Makkuva, Ashok Vardhan and Kim, Hyeji},
  journal={Advances in Neural Information Processing Systems},
  volume={37},
  pages={94934--94970},
  year={2024}
}

@inproceedings{zhang2024activationbeacon,
  title={Long context compression with activation beacon},
  author={Zhang, Peitian and Liu, Zheng and Xiao, Shitao and Shao, Ninglu and Ye, Qiwei and Dou, Zhicheng},
  booktitle={International Conference on Learning Representations},
  volume={2025},
  pages={101932--101948},
  year={2025}
}

@inproceedings{choi2024r2c,
  title={From reading to compressing: Exploring the multi-document reader for prompt compression},
  author={Choi, Eunseong and Lee, Sunkyung and Choi, Minjin and Park, Jun and Lee, Jongwuk},
  booktitle={Findings of the Association for Computational Linguistics: EMNLP 2024},
  pages={14734--14754},
  year={2024}
}

@inproceedings{deng2024giststudy,
  title={A silver bullet or a compromise for full attention? a comprehensive study of gist token-based context compression},
  author={Deng, Chenlong and Zhang, Zhisong and Mao, Kelong and Li, Shuaiyi and Huang, Xinting and Yu, Dong and Dou, Zhicheng},
  booktitle={Proceedings of the 63rd Annual Meeting of the Association for Computational Linguistics (Volume 1: Long Papers)},
  pages={4861--4879},
  year={2025}
}

@inproceedings{li2025500x,
  title={500xcompressor: Generalized prompt compression for large language models},
  author={Li, Zongqian and Su, Yixuan and Collier, Nigel},
  booktitle={Proceedings of the 63rd Annual Meeting of the Association for Computational Linguistics (Volume 1: Long Papers)},
  pages={25081--25091},
  year={2025}
}

@misc{liu2025infopreserve,
  title={Understanding and improving information preservation in prompt compression for llms},
  author={{\L}ajewska, Weronika and Hardalov, Momchil and Aina, Laura and John, Neha Anna and Su, Hang and M{\`a}rquez, Llu{\'\i}s},
  journal={arXiv preprint arXiv:2503.19114},
  year={2025}
}

@inproceedings{tang2025perception,
  title={Perception compressor: A training-free prompt compression framework in long context scenarios},
  author={Tang, Jiwei and Xu, Jin and Lu, Tingwei and Zhang, Zhicheng and YimingZhao, YimingZhao and LinHai, LinHai and Zheng, Hai-Tao},
  booktitle={Findings of the Association for Computational Linguistics: NAACL 2025},
  pages={4093--4108},
  year={2025}
}

@article{radford2019language,
  title={Language models are unsupervised multitask learners},
  author={Radford, Alec and Wu, Jeffrey and Child, Rewon and Luan, David and Amodei, Dario and Sutskever, Ilya and others},
  journal={OpenAI blog},
  volume={1},
  number={8},
  pages={9},
  year={2019}
}

@misc{openai2026models,
  author       = {{OpenAI}},
  title        = {{Models}},
  year         = {2026},
  howpublished = {OpenAI API documentation},
  note         = {Accessed July 18, 2026. Available at \url{https://developers.openai.com/api/docs/models}}
}

@misc{zhipu2026glm52,
  author       = {{Zhipu AI}},
  title        = {{GLM-5.2}},
  year         = {2026},
  howpublished = {Zhipu AI open documentation},
  note         = {Accessed July 18, 2026. Available at \url{https://docs.bigmodel.cn/cn/guide/models/text/glm-5.2}}
}

\clearpage
\appendix

\section{Robustness to Experimental Choices}
\label{app:robustness}

The main diagnostic depends on the \textsc{Beaver} embedding model, the content-word-overlap threshold, and the compression ratio. This appendix varies each choice and shows that the diagnosis remains stable across all variations.

\paragraph{Embedding scorer.} \textsc{Beaver} scores chunks by cosine similarity to the query under an embedding model; the main text uses the released Qwen3-0.6B embeddings configuration. Substituting the GPT-2 checkpoint~\citep{radford2019language} leaves the HotpotQA dangling rate essentially unchanged (Table~\ref{tab:robustness}, $n{=}300$, ratio $0.30$). The GPT-2 encoder changes $\rho_{\mathrm{d}}$ by at most $0.9$ percentage points. The diagnosis is therefore robust across both \textsc{Beaver} embedding encoders.

\paragraph{Overlap threshold.} The unified cross-compressor metric uses the sentence-level predicate $\operatorname{Keep}_{\theta}$ defined in Section~\ref{sec:diagnosis}; paragraph-level prevalence uses the separate paragraph instantiation described there. Sweeping $\theta$ from $0.3$ to $0.7$ (Figure~\ref{fig:threshold}, Table~\ref{tab:threshold-full}) keeps dangling substantial for both a chunk-level and a token-level compressor across the entire range, and at every threshold up to $0.6$, including the setting most lenient to token fragments, LLMLingua-2 dangles at least as much as \textsc{Beaver}, whereas a metric biased against fragments would show the reverse. So the cross-signal agreement in Section~\ref{sec:paradigm} is not an artifact of one lenient cutoff. The token-level curve is nonmonotonic by construction: at strict thresholds a partially retained supporting sentence fails the overlap test and the case moves from dangling to full evidence loss rather than to safety, whereas \textsc{Beaver}'s coherent blocks keep clearing the bar.

\begin{table}[h]
\centering
\small
\setlength{\tabcolsep}{3pt}
\begin{tabular}{lccccc}
\toprule
Overlap threshold & $0.3$ & $0.4$ & $0.5$ & $0.6$ & $0.7$ \\
\midrule
LLMLingua-2 (token) & $28.3$ & $43.5$ & $56.0$ & $57.6$ & $36.4$ \\
\textsc{Beaver} (chunk) & $19.6$ & $25.0$ & $32.1$ & $36.4$ & $40.2$ \\
\bottomrule
\end{tabular}
\caption{Dangling rate (\%) across content-word overlap retention thresholds on the same $184$ bridge examples as Figure~\ref{fig:paradigm} at compression ratio $0.30$. The $0.5$ column matches Figure~\ref{fig:paradigm}.}
\label{tab:threshold-full}
\vspace{-15pt}
\end{table}

\begin{table}[h]
\centering
\scriptsize
\setlength{\tabcolsep}{3pt}
\begin{tabular}{@{}lll@{}}
\toprule
Check & Variation & Outcome \\
\midrule
Embedding scorer & Qwen3-0.6B embeddings $\to$ GPT-2 & $+0.9$ points \\
Overlap threshold & $0.3$ to $0.7$               & substantial throughout \\
Compression ratio & $0.70$ to $0.20$             & monotonic increase \\
\bottomrule
\end{tabular}
\caption{Robustness of the dangling diagnostic to its three main free choices (HotpotQA, $n{=}300$, ratio $0.30$ unless swept). Embedding shifts are measured in percentage points relative to the released Qwen3-0.6B embedding setup.}
\label{tab:robustness}
\vspace{-15pt}
\end{table}

\begin{figure}[h]\centering
\includegraphics[width=\columnwidth]{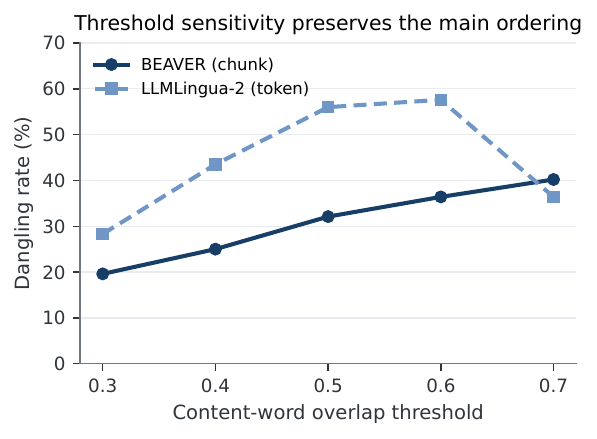}
\caption{Dangling rate across content-word-overlap thresholds for $184$ bridge examples (Figure~\ref{fig:paradigm}; ratio $0.30$).}
\vspace{-15pt}
\label{fig:threshold}\end{figure}

\begin{figure*}[h]
\centering
\includegraphics[width=0.9\textwidth]{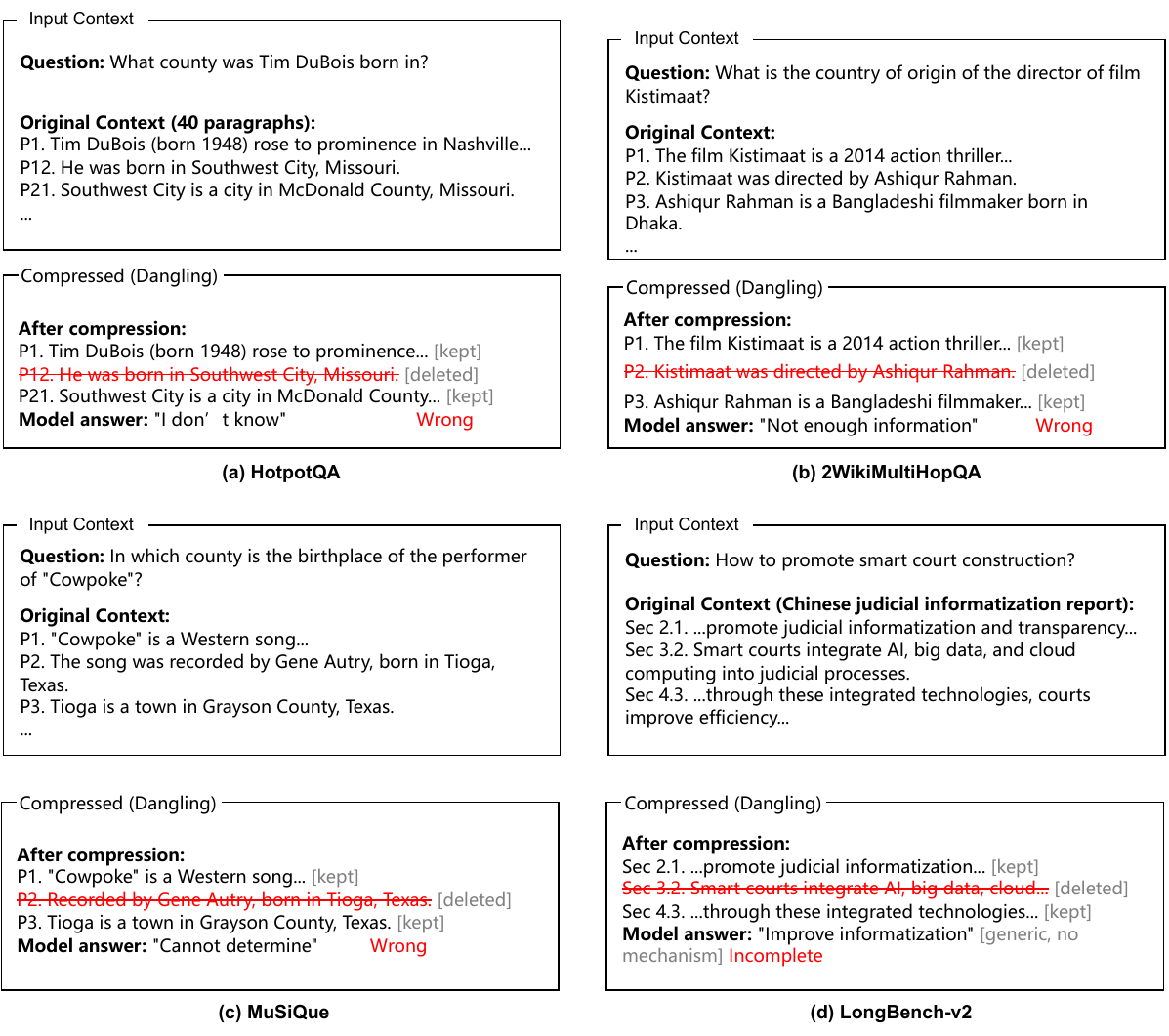}
\caption{Referential dangling examples from HotpotQA, 2WikiMultiHopQA, MuSiQue, and LongBench-v2 Single-Document QA. Each panel shows the original context and the compressed output.}
\label{fig:gallery}
\vspace{-18pt}
\end{figure*}

\paragraph{Compression ratio.} The compression ratio analysis (Figure~\ref{fig:dose}a) sweeps the ratio from $0.70$ to $0.20$ and finds the HotpotQA dangling rate rising monotonically with no qualitative threshold or reversal. The finding is therefore not limited to a single operating ratio. Table~\ref{tab:robustness} summarizes all three checks.

\section{Manual Audit and Qualitative Examples}
\label{app:gallery}

We present one representative dangling case from each dataset to illustrate the ``answer present but bridge deleted'' mechanism. All examples are real cases from the experimental data. Each panel shows the original context and the compressed output with referential dangling. Among the $38$ confirmed cases in the manual audit, the omitted paragraph contained the dataset answer string in $26$, the entity definition in $10$, and another dependency in $2$. The two false positives involved an answer paragraph that was sufficient on its own and an omitted paragraph that supplied only a generic modifier.

\section{Long-Document Dangling by Subdomain}
\label{app:longdoc}

Table~\ref{tab:longdoc} breaks down the LongBench-v2 Single-Document QA diagnostic (Section~\ref{sec:diagnosis}, $n{=}80$, ratio $0.30$) by subdomain. The per-kept-sentence dangling rate is consistent across all seven subdomains ($25.1\%$ to $36.5\%$), and every document in every subdomain has at least one dangling reference. Academic text has the highest rate and also contains dense terminology, frequent cross-section references, and the longest retained-sentence counts. This pattern is consistent with the association between dependency span and dangling reported in Section~\ref{sec:diagnosis}. No subdomain falls below $25\%$, suggesting that the phenomenon is not confined to a single document type.

\begin{table}[h]
\centering
\small
\begin{tabular}{lccc}
\toprule
Subdomain & $n$ & Mean rate & Affected docs \\
\midrule
Academic & $13$ & $36.5\%$ & $100\%$ \\
Literary & $12$ & $34.4\%$ & $100\%$ \\
Financial & $12$ & $32.3\%$ & $100\%$ \\
Legal & $8$ & $29.1\%$ & $100\%$ \\
Detective & $15$ & $27.3\%$ & $100\%$ \\
Event ordering & $11$ & $27.0\%$ & $100\%$ \\
Governmental & $9$ & $25.1\%$ & $100\%$ \\
\midrule
\textbf{All} & $80$ & $\mathbf{30.5\%}$ & $\mathbf{100\%}$ \\
\bottomrule
\end{tabular}
\caption{Referential dangling on LongBench-v2 Single-Document QA by subdomain (\textsc{Beaver}, ratio $0.30$, $n{=}80$ documents). ``Mean rate'' is the per-document average fraction of retained sentences that are dangling, macro-averaged over documents. ``Affected docs'' is the fraction of documents with at least one dangling reference.}
\label{tab:longdoc}
\vspace{-15pt}
\end{table}

\section{Protocol for Fixed-Budget Content Reselection}
\label{app:reselection}

The comparison in Section~\ref{sec:reselection} measures the accuracy change when a missing supporting paragraph replaces lower scoring paragraphs that are not annotated as supporting the answer, without increasing the token budget.

\paragraph{Three conditions.} Each dangling example is evaluated under three conditions. Base is the compressor's original dangling output at target compression ratio $0.30$. Reselected reinserts one omitted supporting paragraph and removes paragraphs not annotated as supporting the answer whose combined length is at least as large. Full support retains both annotated supporting paragraphs while removing two paragraphs not annotated as supporting the answer. We use Base and Reselected for the fixed-budget comparison, while Full support provides a reference for complete evidence retention.

\paragraph{Token budget enforcement.} We tokenize all paragraphs with spaCy, which is also used during compressor preprocessing, and track cumulative token counts. For Reselected, we rank omitted supporting paragraphs by their isolated salience score under the compressor and select the highest scoring paragraph. We then remove the lowest scoring paragraphs not annotated as supporting the answer until their combined token count matches or exceeds that of the reinserted paragraph. Contexts under Reselected are shorter than those under Base on average by $1.5\%$ on HotpotQA, $1.6\%$ on 2WikiMultiHopQA, and $9.6\%$ on MuSiQue.

\paragraph{Dataset specifics.} HotpotQA uses bridge examples with two annotated supporting paragraphs. Panel 1 uses a dangling subset of $80$ examples, while Panel 2 uses a separate evaluation set of $200$ examples. 2WikiMultiHopQA uses compositional examples with exactly two annotated supporting paragraphs and a dangling subset of $72$ examples. MuSiQue uses paragraph-level supporting facts with exactly two annotated supporting paragraphs per example in the first $200$ examples of the development split, yielding a dangling subset of $102$ examples. The dangling subset uses directional content-word overlap at threshold $0.5$ (Section~\ref{sec:diagnosis}): the answer paragraph survives but its paired definition paragraph does not.

\paragraph{Statistical test.} Significance is assessed with the two-sided exact McNemar test on the $2{\times}2$ contingency table of fixes, where the answer under Base is incorrect and the comparison answer is correct, and breaks, where the answer under Base is correct and the comparison answer is incorrect. All three datasets yield $p<10^{-4}$ for the comparison between Base and Reselected on the dangling subset. Base versus Full support yields $p<0.01$ for all four downstream LLMs on the $200$-example HotpotQA set.

\section{Compressor Configurations}
\label{app:compressor-configs}

The six compressors of Section~\ref{sec:paradigm} all compress the same $184$ HotpotQA bridge examples to ratio $0.30$ (keep $30\%$ of tokens). We use official implementations where available; training-free methods require no adaptation.

\begin{table}[t]
\centering
\scriptsize
\setlength{\tabcolsep}{3pt}
\renewcommand{\arraystretch}{1.12}
\begin{tabular}{@{}p{0.34\columnwidth}p{0.56\columnwidth}@{}}
\toprule
\makecell[l]{Role and reported name} & \makecell[l]{Official checkpoint or API identifier} \\
\midrule
\makecell[l]{Beaver scorer\\Qwen3-0.6B embeddings} & \url{Qwen/Qwen3-0.6B} \\
\makecell[l]{Robustness scorer\\GPT-2} & \url{openai-community/gpt2} \\
\makecell[l]{DAC proxy\\Qwen3 0.6B} & \url{Qwen/Qwen3-0.6B} \\
\midrule
\makecell[l]{Downstream QA\\Qwen3 4B; Qwen3 8B} & \url{Qwen/Qwen3-4B}; \url{Qwen/Qwen3-8B} \\
\makecell[l]{Downstream QA\\Llama 3.1 8B; Mistral 7B} & \makecell[l]{\url{meta-llama/Llama-3.1-8B-Instruct}\\\url{mistralai/Mistral-7B-Instruct-v0.3}} \\
\midrule
\makecell[l]{Self-information /\\perplexity proxy\\Llama 2 7B} & \url{meta-llama/Llama-2-7b-hf} \\
\midrule
\makecell[l]{Compression /\\restoration classifiers} & \makecell[l]{\url{microsoft/llmlingua-2-xlm-roberta-large-}\\\url{meetingbank};\\\url{google-bert/bert-base-uncased}} \\
\midrule
\makecell[l]{Proprietary downstream QA\\GPT-5.5; GLM-5.2} & API IDs: \url{gpt-5.5}; \url{glm-5.2} \\
\bottomrule
\end{tabular}
\caption{Official checkpoint and API identifiers. Display names are the shorthand used in the paper; exact identifiers are shown for reproducibility.}
\label{tab:checkpoints}
\vspace{-15pt}
\end{table}

\paragraph{\textsc{Beaver} (embedding similarity, query-aware).} Official repository \texttt{github.com/JusperLee/BEAVER}, coherent-block selection. We use the released configuration with the \url{Qwen/Qwen3-0.6B} checkpoint as the embedding scorer. Each document is segmented into $64$-token pages; the compressor scores pages by cosine similarity between the query embedding and the page's inverse document frequency weighted token embedding average, then selects top-$k$ pages to meet the target ratio. Hardware: NVIDIA A100 80GB.

\paragraph{LLMLingua-2 (trained token classifier, query-agnostic).} Official repository \url{https://github.com/microsoft/LLMLingua}, checkpoint \url{microsoft/llmlingua-2-xlm-roberta-large-meetingbank}. Token-level binary classifier over a sliding $512$-token window; tokens are kept if the classifier score exceeds a threshold calibrated to the target ratio. No query input.

\paragraph{Selective-Context (self-information, query-agnostic).} Reproduced from \citet{li2023selective}. Each token's self-information is computed as $-\log p(\text{token}\mid\text{prefix})$ under the \url{meta-llama/Llama-2-7b-hf} proxy model~\citep{touvron2023llama2}; tokens with self-information below a calibrated threshold are dropped. No query input.

\paragraph{PartPrompt (syntactic parse tree, query-agnostic).} Official repository. Constituency parse tree built with Berkeley Neural Parser; each node (phrase) scores by syntactic salience (depth and span); a knapsack solver based on dynamic programming selects a subset of nodes covering the target ratio. The selected nodes' token spans are concatenated in document order. No query input.

\paragraph{LongLLMLingua (perplexity, query-aware).} Official repository \texttt{microsoft/LLMLingua}. Scores sentences by perplexity under the proxy model conditioned on the query and preceding context; keeps lowest-perplexity (most ``expected'') sentences up to the budget. Uses the \url{meta-llama/Llama-2-7b-hf} proxy model~\citep{touvron2023llama2}.

\paragraph{DAC (attention, query-agnostic).} Official implementation, method \texttt{dynamic\_attn\_ppl}, fusion \texttt{additive} with $\alpha{=}0.8$. Token-level importance derived from attention weights during a single forward pass of the \url{Qwen/Qwen3-0.6B} proxy model~\citep{yang2025qwen3} over the context; tokens below the importance threshold are dropped. Parameter \texttt{compress\_ratio} (the drop fraction) set to $0.70$ to keep $30\%$. We run DAC with a memory-efficient attention accumulator that is numerically identical to its original scoring, allowing the full $184$-example bridge set to fit in memory.

All compressors run single-threaded on an A100 80GB for consistency.

\section{Implementation Details for Automatic Context Restoration}
\label{app:detector}

The following engineering details document the automatic restoration pipeline of Section~\ref{sec:method} for reproducibility.

\begin{table}[h]
\centering
\small
\setlength{\tabcolsep}{3pt}
\begin{tabular}{llccc}
\toprule
Downstream LLM & \makecell{Candidate\\source} & Base & Restored & $p$ \\
\midrule
Qwen3-8B & First mention & 0.567 & \textbf{0.613} & 0.022 \\
Mistral-7B & First mention & 0.455 & \textbf{0.520} & 0.012 \\
Llama-3.1-8B & First mention & 0.587 & 0.600 & 0.60 \\
Llama-3.1-8B & Hybrid & 0.587 & 0.610 & 0.17 \\
\bottomrule
\end{tabular}
\caption{Automatic restoration results with the classifier fixed at $K{=}3$. The evaluation uses $300$ HotpotQA examples, except for Mistral-7B, which uses $200$. Base is \textsc{Beaver} at compression ratio $0.30$, and Restored has an average ratio of $0.31$. The reported $p$ values use paired McNemar tests.}
\label{tab:method-full}
\vspace{-15pt}
\end{table}

\begin{figure}[h]\centering
\includegraphics[width=\columnwidth]{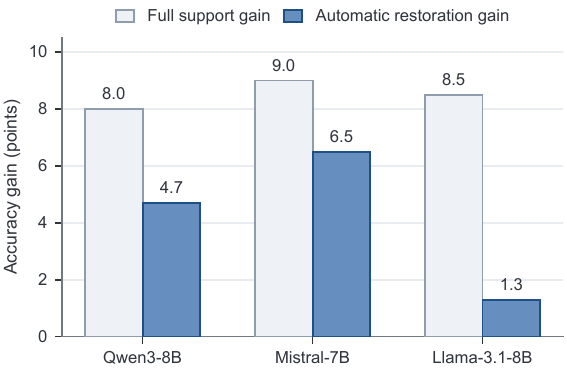}
\caption{Accuracy gains from full support and first-mention automatic restoration on HotpotQA (\textsc{Beaver} at ratio $0.30$, $K{=}3$). Full support uses $200$ examples; restoration uses $300$ for Qwen3-8B and Llama-3.1-8B and $200$ for Mistral-7B.}
\label{fig:repair}\end{figure}

\begin{table}[h]
\centering
\small
\setlength{\tabcolsep}{3pt}
\begin{tabular}{lcc}
\toprule
Feature & Fixed (23) & Failed (107) \\
\midrule
Sentences added, mean $\pm$ SD & 2.13 $\pm$ 1.08 & 1.79 $\pm$ 1.17 \\
Sentences added, median & 3.0 & 2.0 \\
\midrule
\multicolumn{3}{c}{McNemar: 23 fixes, 9 breaks, $p=0.022$} \\
\bottomrule
\end{tabular}
\caption{Restoration statistics when the \textsc{Beaver} baseline was incorrect (HotpotQA, $n{=}300$, compression ratio $0.30$; downstream Qwen3-8B). SD denotes standard deviation.}
\label{tab:repair_patterns}
\end{table}

\begin{table}[h]
\centering
\footnotesize
\setlength{\tabcolsep}{3pt}
\begin{tabular}{lcc}
\toprule
Condition & Accuracy [95\% CI] & $\Delta$ \\
\midrule
Base compressor & $0.567$ {\footnotesize[.51,.62]} & \NA \\
Random insertion, $m$ sentences & $0.587$ {\footnotesize[.53,.64]} & $+2.0$ \\
Targeted restoration, $m$ sentences & $\mathbf{0.613}$ {\footnotesize[.55,.67]} & $+4.7$ \\
\bottomrule
\end{tabular}
\caption{Matched addition control on HotpotQA with Qwen3-8B ($n{=}300$). Random insertion and targeted restoration add the same number of sentences per example ($m$: mean $1.81$, median $2$, interquartile range $[1,3]$; approximately $40$ tokens; compression ratio $0.30$ to $0.31$; $K{=}3$). Brackets report bootstrap $95\%$ confidence intervals.}
\label{tab:b1control}
\end{table}

\begin{table}[h]
\centering
\footnotesize
\setlength{\tabcolsep}{3pt}
\begin{tabular}{lcc}
\toprule
Compressor (output type) & Gain & McNemar $p$ \\
\midrule
\textsc{Beaver} (coherent chunks) & $+4.7$ & $p{=}0.022$ \\
PartPrompt (parse tree spans) & $+3.2$ & $0.30$ \\
Selective-Context (self-information) & $+1.0$ & $0.80$ \\
LLMLingua-2 (token fragments) & $+1.0$ & $0.75$ \\
\bottomrule
\end{tabular}
\caption{Transfer of one restoration configuration across four compressor outputs on HotpotQA with downstream Qwen3-8B ($n{\approx}150$ to $300$).}
\label{tab:boundary}
\vspace{-15pt}
\end{table}

\paragraph{Output granularity.} We evaluate automatic restoration on \textsc{Beaver} because its coherent multi-sentence blocks permit controlled changes at sentence boundaries and keep restored sentences interpretable. Compressors with token-level outputs produce fragmented text in which inserting complete supporting sentences may alter the original selection objective and complicate comparison. Appendix H reports transfer across four compressor outputs, including LLMLingua-2. The classifier does not take compressor scores or identities as input, although its training distribution and integration strategy may affect transfer across output granularities.

\paragraph{Classifier architecture and training.} The classifier is \texttt{bert-base-uncased} (about $110$M parameters) with a binary classification head over a retained sentence, a candidate sentence, and the question, separated by \texttt{[SEP]} tokens. The maximum sequence length is $256$. We fine-tune the classifier for three epochs with AdamW using a learning rate of $2\times10^{-5}$, a batch size of $32$, a linear schedule with $10\%$ warmup, and fp16, and retain the checkpoint with the highest development F1. The resulting development F1 is $0.82$, with a precision of $0.84$.

\paragraph{Training pair construction.} Positive pairs consist of a retained sentence and an omitted annotated supporting sentence that share an entity. Hard negatives also share an entity but do not satisfy this positive-pair rule, while easy negatives share no entity. Positive pairs account for $1.9\%$ of all candidate pairs ($3026/157887$), so we downsample negatives to $1.5\times$ the number of positives, yielding $7{,}565$ rebalanced pairs with $40\%$ positives. Pairs are split by example identifier so that no example crosses the training and development boundary. The source contexts come from the HotpotQA training split and are disjoint from the evaluation contexts used in all downstream experiments.

\paragraph{Number of restored sentences ($K$).} At inference, we add the top-$K$ candidates by classifier confidence. In the Qwen3-8B sweep, $K{=}2$ gives $+3.7$ points ($p{=}0.08$), $K{=}3$ gives $+4.7$ points ($p{=}0.02$), and adding all candidates gives $+4.3$ points ($p{=}0.03$). We use $K{=}3$, the setting with the largest observed gain, which adds about $1.8$ sentences per example.

\section{Automatic Context Restoration Results and Controls}
\label{app:repair-table}

\paragraph{Downstream results.} Table~\ref{tab:method-full} reports the accuracies and significance tests for automatic restoration, and Figure~\ref{fig:repair} compares the restoration gain with the gain obtained when both supporting paragraphs are retained.

\paragraph{Restoration outcomes.} Table~\ref{tab:repair_patterns} compares cases corrected by automatic restoration with those that remain incorrect among the $130$ examples for which the \textsc{Beaver} baseline is incorrect. The two groups receive similar numbers of restored sentences. These measurements show that restoration size alone does not distinguish the outcomes.

\paragraph{Matched addition control.} Table~\ref{tab:b1control} reports the control experiment of Section~\ref{sec:method}. Random insertion and targeted restoration add the same number of sentences per example.

\section{Transfer Across Compressor Outputs}
\label{app:boundary}

Table~\ref{tab:boundary} applies the same restoration configuration to each compressor output with Qwen3-8B as the downstream model on HotpotQA. The classifier and candidate generator are calibrated on \textsc{Beaver}'s retained sentence distribution and HotpotQA first-mention structure, so the experiment evaluates transfer of one fixed configuration rather than configurations optimized separately for each compressor.

\end{document}